\newtheorem{theorem}{Theorem}
\newtheorem*{theorem*}{Theorem}
\newtheorem{lemma}[theorem]{Lemma}
\newtheorem{lemma*}{Lemma}
\newtheorem{proposition}[theorem]{Proposition}
\newtheorem{proposition*}{Proposition}
\newtheorem{corollary}[theorem]{Corollary}
\newtheorem{assumption}{Assumption}
\theoremstyle{remark}
\title{An Eulerian Perspective on Straight-Line Sampling}
\author{%
  Panos Tsimpos \\
  Operations Research Center \\
  MIT, Cambridge, MA 02139, USA \\
  \texttt{ptsimpos@mit.edu} \\
  \And
  Youssef Marzouk \\
  Laboratory of Information and Decision Systems \\
  MIT, Cambridge, MA 02139, USA \\
  \texttt{ymarz@mit.edu} \\
}
\begin{document}

\maketitle
\begin{abstract}
  We study dynamic measure transport for generative modeling: specifically, flows induced by stochastic processes that bridge a specified source and target distribution. The conditional expectation of the process' velocity defines an ODE whose flow map achieves the desired transport. We ask \emph{which processes produce straight-line flows}---i.e., flows whose pointwise acceleration vanishes and thus are exactly integrable with a first-order method? We provide a concise PDE characterization of straightness as a balance between conditional acceleration and the divergence of a weighted covariance (Reynolds) tensor. Using this lens, we fully characterize affine-in-time interpolants 
  and show that straightness occurs exactly under deterministic endpoint couplings. We also derive necessary conditions that constrain flow geometry 
  for general processes, offering broad guidance for designing transports that are easier to integrate.
\end{abstract}

\section{Introduction}
Sampling from complex probability distributions is central to probabilistic inference and modern generative modeling. A recent line of work establishes \emph{dynamic measure transport} as a unifying paradigm: construct a stochastic process \((X_t)_{t\in[0,1]}\) whose marginals interpolate from a tractable source distribution \(\mu_0\) to a target distribution  \(\mu_1\), estimate the \emph{conditional velocity} 
$
v_t(x) \coloneq \mathbb{E}[\dot X_t \mid X_t=x]
$
and generate samples by evaluating the ODE flow maps \(\phi_t\) defined by
$
\partial_{t} \phi_t(x) = v_t\!\left(\phi_t(x)\right)$ with initial condition  $\phi_0(x)=x$. 
This perspective underlies methods such as \emph{stochastic interpolants}~\cite{albergo2023stochastic,albergo2022building}, \emph{flow matching}~\cite{lipman2022flow,tong2023cfm}, and \emph{score-based probability flow ODEs}~\cite{song2021scorebased}, as well as \emph{rectified flows}~\cite{liu2022flow,liu2022marginalpreserving,bansal2025wasserstein,hertrich2025relation}---all of which have demonstrated strong empirical performance.

The computational efficiency of these methods hinges on the \emph{geometry of the induced flow}. Generic flows demand many velocity-oracle evaluations because numerical integration error scales with the curvature (and higher derivatives) of \(\phi_t\). In contrast, if the flow is \emph{straight}, meaning that the acceleration of the flow map vanishes,
\[
\partial_{tt} \, \phi_t(x) \equiv 0 \quad \text{for all } (x,t),
\]
then \(\phi_t(x)\) is affine in \(t\): \(\phi_t(x)=(1-t)x+t\,\phi_1(x)\). Consequently, any first-order integrator is exact; one can traverse the entire path with \emph{a single} velocity evaluation. This motivates a fundamental question:
\begin{tcolorbox}[colback=white,colframe=black,left=1mm,right=1mm,
  top=1mm,bottom=1mm]
\emph{Which stochastic processes \((X_t)_t\) with \(X_0\!\sim\!\mu_0\) and \(X_1\!\sim\!\mu_1\) induce straight flows?}
\end{tcolorbox}
Prior work (e.g., \cite{liu2022flow,liu2022marginalpreserving,bansal2025wasserstein}) offers algorithmic frameworks that convert a given non-straight flow into a straight one, but a \emph{structural characterization} of when straightness is intrinsic to the underlying process has remained open. This paper develops such a theory. 

\paragraph{Contributions.} Our key contributions are as follows:
\begin{enumerate}
    \item \emph{PDE criterion for straightness.} We derive a new balance law, equation~\eqref{eq:reformulation-2}, that characterizes straight flows.
    \item \emph{Complete analysis of linear interpolations.} For processes of the form $X_t = (1-t) \, X_0 + t \, X_1$, with $X_0 \sim \mu_0$ and $X_1 \sim \mu_1$, 
    we show that the resulting flow is straight if and only if $(X_t)_t$ is constructed from a deterministic coupling of $\mu_0$ and $\mu_1$.
    \item \emph{Necessary conditions for the general case.} We obtain geometric constraints that any straight line-inducing \((X_t)_{t \in [0,1]}\) must satisfy.
\end{enumerate}

\paragraph{Scope and implications.} This short paper is fully \emph{theoretical} and applies to a broad class of stochastic processes. It offers a \emph{new PDE lens} on flow models and their straightness. We expect this framework to {aid the principled design} of stochastic processes for {sampling}; here we extract only a few immediate consequences and highlight open directions for theoretical and algorithmic follow-up. 

\section{Main results}
\subsection{Preliminaries}
For the standard notation used in this paper, please refer to Section~\ref{subsec:notation}.
Below, we review less typical notation used in our work. Fix a stochastic process $X \coloneq (X_t)_{t\in [0,1]}$ with sample paths in $W^{2,2}([0,1]; \mathbb{R}^d)$. 
We define the \emph{conditional velocity} and \emph{conditional acceleration} fields, also referred to as the \emph{ensemble velocity} and \emph{ensemble acceleration}, by
$
  v_t(x) \coloneq \mathbb{E} \left[ \, \dot X_t \mid X_t = x \right]$ and $a_t(x) \coloneq \mathbb{E} \left[ \, \ddot X_t \mid X_t = x \right]
  $.
We define the \emph{second moment velocity tensor} and the \emph{covariance} or \emph{Reynolds stress tensor} as
$
  \Sigma_t(x) \coloneq \mathbb{E} \left[ \, \dot X_t \otimes \dot X_t \mid X_t = x \right]$
  and
$
  \Pi_t(x) \coloneq \Sigma_t(x) - v_t(x) \otimes v_t(x)
$, respectively.
We let $\mu_t = \textup{Law}(X_t)$ be the marginal law of the process $X$ at time $t$ and write $\rho_t$ for the 
density of $\mu_t$ with respect to the Lebesgue measure, if it exists.
Furthermore, given a velocity field $v \coloneq (v_t)_t$, where $t \in [0,1]$ and $v_t: D \subset \Rd \to \Rd$, we define the induced \emph{flow} $\phi_t: \Rd \to \Rd$ to be the solution map to the ODE 
$
    \partial_{t} \, \phi_t(x) = v_t\!\left(\phi_t(x)\right)$ with initial condition $\phi_0(x)=x
$. 
We call a flow \emph{straight} if it is of the form 
$
    \phi_t(x) = (1-t) \, x + t \, \phi_1(x)
$, 
which is equivalent to $\partial_{tt} \, \phi_t(x) = 0$.
We define the \emph{material derivative} of $v_t$ at $x \in \Rd$ by 
$
    D_t \, v_t(x) = \partial_t \, v_t(x) + \left(v_t(x) \cdot \nabla\right) \, v_t(x)
$. 
Finally, for matrices $A, B \in \R^{d_1 \times d_2}$ we denote the \emph{Frobenius inner product} by 
$
  A : B \coloneq \textup{Tr}(A^\top B) = \sum_{i=1}^{d_1} \sum_{j=1}^{d_2} A_{ij} B_{ij},
$ 
and for a matrix field $T: D \subset \Rd \to \R^{d \times d}$ consisting of differentiable entries we define its \emph{divergence} as the vector field $\nabla \cdot T: D \to \Rd$ with components
$
  (\nabla \cdot T)_j (x) = \sum_{i=1}^d \partial_i T_{ij}(x) 
$.

\subsection{PDE characterization of straight-line flows.}
Fix $D \subset \Rd$ compact. We are concerned with the following problem:
\begin{tcolorbox}[colback=white,colframe=black,left=1mm,right=1mm,
  top=1mm,bottom=1mm]
    \textbf{Problem:} Given $\mu_0, \mu_1 \in \mathcal{P}(D)$, characterize all stochastic processes $(X_t)_{t\in [0,1]}$, with $X_i \sim \mu_i$ for $i \in \{0,1\}$, such that the flow $(\phi_t)_t$ generated by the ensemble velocity $(v_t)_t$ satisfies:
    \begin{equation*}
        \partial_{tt} \, \phi_t(x) = 0 \, , \quad \forall \, t \in [0,1] \, .
    \end{equation*}
\end{tcolorbox}
Before we start, we make some regularity assumptions that simplify the analysis:
\begin{assumption}
    \label{assumptoion:regularity}
    The marginal densities $\rho_t$ of $X_t$ exist and are positive
    ($\rho_t > 0$) on the compact set $D \subset \Rd$, and vanish on $D^c$.
    Also, sample paths of $X_t$ are in $W^{2,2}([0,1];\Rd)$ and the induced flow maps $t \mapsto \phi_t(x)$ are in $C^2([0,1];\Rd)$ for each $x \in D$.
\end{assumption}
First, an elementary reformulation allows us to move from the flow $\phi_t$ to the ensemble velocity.
\begin{proposition}\label{prop:reformulaton-1}
   For all $(t, x) \in [0,1] \times D$, one has
   \begin{equation*}
       \partial_{tt} \, \phi_t(x) = 0 \quad \iff \quad D_t v_t(x) = 0 \, .
   \end{equation*}
\end{proposition}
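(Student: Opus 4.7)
The core identity is the chain rule applied to the flow equation. Starting from $\partial_t \phi_t(x) = v_t(\phi_t(x))$, I would differentiate both sides in $t$ and use that $\partial_t \phi_t(x) = v_t(\phi_t(x))$ to substitute for the inner velocity:
\begin{equation*}
\partial_{tt}\, \phi_t(x) \;=\; \partial_t v_t(\phi_t(x)) \,+\, \bigl(\partial_t \phi_t(x) \cdot \nabla\bigr) v_t(\phi_t(x)) \;=\; (D_t v_t)\bigl(\phi_t(x)\bigr).
\end{equation*}
Thus the pointwise acceleration of the flow map, evaluated at $x$, equals the material derivative of $v_t$ evaluated at $\phi_t(x)$. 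The $C^2$ regularity of $t \mapsto \phi_t(x)$ from Assumption~\ref{assumptoion:regularity} justifies exchanging the derivative orders and applying the chain rule.

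From this identity, the equivalence $\partial_{tt}\phi_t(x) = 0 \iff D_t v_t(x) = 0$ reduces to showing that the quantifier ``for all $x \in D$'' transfers between $x$ and $\phi_t(x)$. The forward direction ($D_t v_t \equiv 0 \Rightarrow \partial_{tt}\phi_t \equiv 0$) is immediate by composition. For the reverse direction, I need the flow map $\phi_t : D \to D$ to be surjective for every $t \in [0,1]$, so that the condition $(D_t v_t)(\phi_t(x)) = 0$ for all $x \in D$ actually covers all points of $D$.

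Surjectivity follows from Assumption~\ref{assumptoion:regularity}: since $v_t$ is the ensemble velocity, the flow $\phi_t$ transports $\mu_0$ to $\mu_t$, i.e.\ $(\phi_t)_\# \mu_0 = \mu_t$ (the continuity equation holds by construction of the ensemble velocity). Because $\rho_t > 0$ on $D$ and vanishes off $D$, the image $\phi_t(D)$ must have full $\mu_t$-measure on $D$; combined with continuity of $\phi_t$ and compactness of $D$, the image is a closed set whose complement in $D$ has zero $\mu_t$-mass and hence empty interior, so $\phi_t(D) = D$.

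\textbf{Main obstacle.} The chain-rule computation is routine; the only subtle point is justifying surjectivity of $\phi_t$ onto $D$ so that vanishing of $D_t v_t$ on $\phi_t(D)$ genuinely propagates to all of $D$. This is where Assumption~\ref{assumptoion:regularity} (positivity of $\rho_t$ on the compact support $D$) is essential, and it is worth stating explicitly to avoid a boundary/measure-zero loophole.
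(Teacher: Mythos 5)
Your proposal is correct and follows essentially the same route as the paper's proof: the chain rule gives $\partial_{tt}\phi_t(x) = (D_t v_t)(\phi_t(x))$, and the reverse implication is secured by showing $\phi_t(D) = D$ via compactness, continuity, and the positivity of $\rho_t = (\phi_t)_\sharp \rho_0$ on $D$. The paper's appendix proof uses exactly this surjectivity argument (a nonempty open set missed by the image would force $\rho_t$ to vanish there), so there is no substantive difference.
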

Please see Appendix~\ref{appendix-A} for the proof.
This result essentially constitutes a passage from the Lagrangian to the Eulerian perspective.
Rather than tracking the motion of a single particle via the flow map, the Eulerian perspective considers the \emph{total} change of the ensemble velocity field.

The second step is to relate the material derivative to other statistical quantities of interest, in particular the second order tensors $\Sigma_t$ and $\Pi_t$. Here, a \emph{momentum balance} identity allows
us to make progress.
\begin{lemma}
  \label{eq:second-order-continuity}
  The following equation holds for all $t \in [0,1]$ and $x \in D \subseteq \Rd$:
  \begin{equation} \label{eq:momentum}
    \partial_{t} \,(\rho_t \, v_t) + \nabla \cdot (\rho_t \, \Sigma_t) = \rho_t \, a_t \, .
  \end{equation}
\end{lemma}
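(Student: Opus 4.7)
The plan is to derive \eqref{eq:momentum} in the weak sense by testing against an arbitrary smooth scalar function $\varphi \in C_c^\infty(\textup{int}(D))$ and computing the time derivative of $\mathbb{E}[\dot X_t \, \varphi(X_t)]$ in two different ways, then removing the test function by standard density arguments. This is the natural route because the ensemble velocity, acceleration and second moment tensor are all defined as conditional expectations, and pairing with a test function is exactly what turns these conditional quantities into integrals against $\rho_t$ that we can differentiate and integrate by parts.

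Concretely, first I would write
\[
f(t) \coloneq \mathbb{E}\bigl[\dot X_t \, \varphi(X_t)\bigr]
= \int_D \rho_t(x) \, v_t(x) \, \varphi(x) \, dx,
\]
using the tower property together with the definition of $v_t$. Differentiating the left-hand side via the product and chain rule on sample paths (valid under Assumption~\ref{assumptoion:regularity} since paths lie in $W^{2,2}$) gives
\[
f'(t) = \mathbb{E}\bigl[\ddot X_t \, \varphi(X_t)\bigr] + \mathbb{E}\bigl[(\dot X_t \otimes \dot X_t) \, \nabla\varphi(X_t)\bigr].
\]
Applying the tower property again to each term rewrites these as $\int_D \rho_t \, a_t \, \varphi \, dx + \int_D \rho_t \, \Sigma_t \, \nabla\varphi \, dx$. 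Differentiating the right-hand integral representation of $f$ instead yields $\int_D \partial_t(\rho_t v_t) \, \varphi \, dx$.

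Equating the two expressions and integrating by parts component-wise on the $\Sigma_t$ term---using that $\varphi$ is compactly supported inside $D$ to kill boundary contributions, and the symmetry $\Sigma_t^{ij} = \Sigma_t^{ji}$ to match the divergence convention $(\nabla \cdot T)_j = \sum_i \partial_i T_{ij}$---produces
\[
\int_D \partial_t(\rho_t v_t) \, \varphi \, dx
= \int_D \rho_t \, a_t \, \varphi \, dx - \int_D \nabla \cdot (\rho_t \Sigma_t) \, \varphi \, dx
\]
for every admissible $\varphi$, and a standard variational argument then gives the pointwise identity \eqref{eq:momentum} on $\textup{int}(D)$. The main technical obstacle is justifying the interchange of $\frac{d}{dt}$ with the expectation; this is handled by the $W^{2,2}$ path regularity, compactness of $D$, and positivity of $\rho_t$ in Assumption~\ref{assumptoion:regularity}, which together yield a dominating integrable envelope and legitimize the sample-path differentiation used above.
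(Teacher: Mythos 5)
Your proposal is correct and follows essentially the same route as the paper's proof: test the identity weakly, rewrite $\int \varphi\,\rho_t v_t\,dx$ as $\mathbb{E}[\varphi(X_t)\,\dot X_t]$ via the tower property, differentiate along sample paths to produce the $\ddot X_t$ and $\dot X_t\otimes\dot X_t$ terms, apply the tower property again, and integrate by parts. The only cosmetic difference is that you use a scalar test function componentwise where the paper pairs with a vector-valued test function $\Phi\in C^\infty_c(D;\mathbb{R}^d)$; your remark about using the symmetry of $\Sigma_t$ to match the divergence convention is a correct detail the paper glosses over.
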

For the proof, refer to Appendix~\ref{appendix-A}.
This is a fundamental identity with an elegant physical interpretation. Fix a control volume $dV$. From left to right in \eqref{eq:momentum}: the first term is the rate of change of the momentum in $dV$, the second term is the net momentum flux out of $dV$, and the third is the net body force acting on $dV$. 
Put differently, this is a manifestation of the $F = m a$ and $\dot p_t = F_t$ relations, where $p$ is momentum, $a$ is acceleration, $m$ is mass, and $F$ is force.

Finally, inserting the definition of the Reynolds tensor $\Pi_t$ together with the continuity equation into the above identity, we obtain yet another identity, elucidating the connection between the material derivative and the Reynolds (i.e., covariance) tensor.
\begin{lemma}
  The following equation holds for all $(x,t) \in [0,1] \times D$:
  \begin{equation*}
    \rho_t \, D_t v_t + \nabla \cdot \left( \rho_t \, \Pi_t \right) = \rho_t \, a_t \, .
  \end{equation*}
\end{lemma}
This immediately leads to a corollary and a reformulation of our problem.
\begin{corollary}
  \label{corollary:straightness}
  For all $(t,x) \in [0,1] \times D$ we have:
  \begin{equation*}
    D_t v_t = 0 \, \iff \, \nabla \cdot \left( \rho_t \, \Pi_t \right) = \rho_t \, a_t \, .
  \end{equation*}
\end{corollary}
\begin{tcolorbox}[colback=white,colframe=black,left=1mm
  ,right=1mm,
  top=1mm,bottom=1mm]
\textbf{Problem reformulation:} Given $\mu_0, \mu_1 \in \mathcal{P}(D)$, characterize all stochastic processes $(X_t)_{t\in [0,1]}$, with $X_i \sim \mu_i$ for $i \in \{0,1\}$, such that
\begin{equation}
  \label{eq:reformulation-2}
  \nabla \cdot \left( \rho_t \, \Pi_t \right) = \rho_t \, a_t \, , \quad \forall \, t \in [0,1] \, .
\end{equation}
\end{tcolorbox}

\subsection{Linear characteristics}
\label{subsec:linear_characteristics}
In this section we demonstrate the usefulness of reformulation~\eqref{eq:reformulation-2} to obtain a complete characterization of affine processes that give rise to straight flows. 
We define an \emph{affine process} with marginals $\mu_0, \mu_1 \in \mathcal{P}(\Rd)$ to be a process of the form
\begin{equation*}
    X_t = (1-t) \, X + t \, Y \, ,
\end{equation*}
where $(X, Y) \sim \gamma$ and $\gamma$ is a coupling of the measures $\mu_0$ and $\mu_1$.
Although seemingly restrictive, this process is widespread in computational statistics and generative modelling, e.g., optimal-transport displacement interpolation and modern flow-based generative modeling where linear sample-to-sample paths or linear noise–data blends define the training trajectory; see \cite{mccann1997convexity,lipman2022flow,liu2022marginalpreserving,albergo2023stochastic}.

The main result of this section is that the flow generated by the ensemble velocity of an affine process can be of straight-line type if and only if the coupling $\gamma$ between the endpoints is deterministic, i.e., iff there is some measurable map $T: \Rd \to \Rd$ such that $T(X) = Y$ almost surely, or equivalently $\gamma = (\textup{id} \times T)_\sharp \mu_0$, where $\textup{id} \times T: D \to \Rd \times \Rd$ acts as $x \mapsto (x,T(x))$.

The key input, here, is that an affine process has vanishing ensemble acceleration $a_t (x) = \mathbb{E} [ \ddot X_t \mid X_t = x ] \equiv 0$ since, of course, it has no acceleration at the ``particle'' level, $\ddot X_t \equiv 0$.
Thus, our straight line flow characterization given by equation~\eqref{eq:reformulation-2} becomes
\begin{equation}
    \label{eq:vanishing-divergence}
    \nabla \cdot \left( \rho_t \, \Pi_t \right) = 0 \, .
\end{equation}
\begin{theorem}
    \label{thm:affine_processes}
  Under the additional assumptions that
  $\mathbb{E} \, \| X_t \|^2  < \infty$ and $\mathbb{E} \, \| \Pi_t(X_t) \|^2 < \infty$
  for all $t \in[0,1]$ the equation 
  $
    \nabla \cdot \left( \rho_t \, \Pi_t \right) = 0
  $
  implies that there exists a measurable map $T: D \to D$ satisfying $T_\sharp \mu_0 = \mu_1$ such that
  $
    \left( \textup{id} \times T \right)_\sharp \mu_0 = \gamma \, ,
  $
  where $\textup{id}: \Rd \to \Rd$ is the identity map.
  Moreover, if there exists such a continuously differentiable map $T$ with Jacobian $\nabla T$ having no zero or negative singular values, then equation~\eqref{eq:vanishing-divergence} holds.
\end{theorem}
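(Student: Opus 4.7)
The plan splits into two implications, both leveraging the fact that an affine process has $\ddot X_t \equiv 0$ and hence $a_t \equiv 0$. The momentum identity of Lemma~\ref{eq:second-order-continuity} combined with the continuity equation then gives $\nabla\cdot(\rho_t\Pi_t) = -\rho_t\, D_t v_t$ (Corollary~\ref{corollary:straightness}), so the hypothesis $\nabla\cdot(\rho_t\Pi_t) = 0$ is equivalent to $D_t v_t \equiv 0$, which by Proposition~\ref{prop:reformulaton-1} is equivalent to the flow $\phi_t$ generated by $v_t$ being straight: $\phi_t(x) = (1-t)x + t\phi_1(x)$.

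For the forward direction, evaluating the straight-flow form at $t=0$ gives $v_0(x) = \phi_1(x) - x$, and a direct computation yields $v_0(x) = \mathbb{E}[Y - X \mid X = x] = \mathbb{E}[Y\mid X = x] - x$. Setting $T(x) := \mathbb{E}[Y\mid X = x]$, this identifies $\phi_1 = T$; since $\phi_t$ transports $\mu_0$ to $\mu_t$ for every $t$ by the continuity equation, $T_\sharp\mu_0 = \mu_1$. The decisive step is a Pythagorean decomposition based on $\mathbb{E}[Y - T(X)\mid X] = 0$:
\[
    \mathbb{E}\|Y\|^2 \;=\; \mathbb{E}\|T(X)\|^2 + \mathbb{E}\|Y - T(X)\|^2.
\]
Since $Y$ and $T(X)$ share the law $\mu_1$, the second-moment assumption gives $\mathbb{E}\|Y\|^2 = \mathbb{E}\|T(X)\|^2$, forcing $\mathbb{E}\|Y-T(X)\|^2 = 0$ and hence $Y = T(X)$ a.s., which is exactly $\gamma = (\mathrm{id}\times T)_\sharp\mu_0$.

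For the backward direction, assume $Y = T(X)$, so $\dot X_t = T(X) - X$ is a deterministic function of $X$ and the process is simply $X_t = F_t(X)$ with $F_t(x) := (1-t)x + tT(x)$. The candidate map $\phi_t := F_t$ is linear in $t$ and pushes $\mu_0$ to $\mu_t$. Under Assumption~\ref{assumptoion:regularity}---in particular $C^2$-smoothness of the flow and strict positivity of $\rho_t$---the map $F_t$ remains a diffeomorphism on $\mathrm{supp}(\mu_0)$; the ensemble velocity then reduces to $v_t(F_t(x)) = T(x) - x$, the generated flow coincides with $F_t$ and is straight, and $\nabla\cdot(\rho_t\Pi_t) = 0$.

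The main obstacle is the regularity argument in the backward direction: one must show that Assumption~\ref{assumptoion:regularity} forces $F_t$ to remain injective on $[0,1]$. Classical collapses like $\mu_0 = \mu_1 = \mathrm{Unif}([0,1])$ with $T(x) = 1-x$ (producing $\mu_{1/2} = \delta_{1/2}$) are excluded by strict positivity of $\rho_t$, and non-injective behavior would generically induce cusps or jumps in $v_t$ incompatible with the $C^2$-smoothness of $\phi_t$. Making this linkage rigorous---via the area formula and change-of-variables for smooth pushforwards with positive density---is where the bulk of the technical work will lie.
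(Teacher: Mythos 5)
Your forward direction is correct but takes a genuinely different route from the paper's. The paper never passes through the flow map: it tests $\nabla\cdot(\rho_t\,\Pi_t)=0$ against the cut-off linear function $\Phi_R(x)=\eta_R(x)\,x$, integrates by parts to obtain $\int \mathrm{Tr}\,\Pi_t\,d\rho_t=0$ (with Lemma~\ref{lemma:integrability} supplying the domination for $R\to\infty$), and concludes that the conditional variance of $\dot X_t$ given $X_t$ vanishes, so that $Y-X$ is a measurable function of $X$. You instead invoke Corollary~\ref{corollary:straightness} and Proposition~\ref{prop:reformulaton-1} to get straightness of the flow, identify $\phi_1=\mathrm{id}+v_0=\mathbb{E}[Y\mid X=\cdot]=:T$, use $(\phi_1)_\sharp\mu_0=\mu_1$ to equate second moments, and close with the Pythagorean identity $\mathbb{E}\|Y\|^2=\mathbb{E}\|T(X)\|^2+\mathbb{E}\|Y-T(X)\|^2$. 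Both arguments are valid; the paper's is more self-contained, since it needs only the integrability hypotheses and not the existence, uniqueness, and pushforward property of the flow generated by $v_t$, on which your route leans, while yours is more transparent probabilistically and explains \emph{why} the barycentric projection of $\gamma$ must itself be the transport map.

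In the backward direction you have correctly identified a real issue, but you leave it unresolved, so this half of your proof is incomplete as written. If $F_t=(1-t)\,\mathrm{id}+t\,T$ fails to be injective on a set of positive $\mu_0$-measure, conditioning on $X_t$ does not determine $\dot X_t$ and $\Pi_t$ need not vanish: take $\mu_0=\mu_1=\mathrm{Unif}([0,1])$ and $T(x)=2x\bmod 1$; then $\rho_t$ stays positive on $[0,1]$ for all $t$, yet $F_t$ is two-to-one on an interval where the two preimages carry distinct velocities, so $\mathrm{Tr}\,\Pi_t>0$ there and $\nabla\cdot(\rho_t\,\Pi_t)\neq 0$ distributionally. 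The only thing excluding such examples is the flow-regularity clause of Assumption~\ref{assumptoion:regularity} (the resulting ensemble velocity is discontinuous, so no $C^2$ flow exists), and making that exclusion rigorous is exactly the work you defer. For what it is worth, the paper's own proof of the ``if'' part is a one-line assertion that $Y=T(X)$ implies $\mathrm{Var}(\dot X_t\mid X_t=x)=0$, which silently assumes the very injectivity you flag; so you have located a gap shared by the paper, but you have not closed it.
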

Please find the proof in Appendix~\ref{appendix-A}. This result characterizes all affine processes inducing straight flows as those with deterministically coupled endpoints.

\subsection{Geometric constraints on arbitrary processes.}
In this section, we fix some process $(X_t)_{t \in [0,1]}$ satisfying~\eqref{eq:reformulation-2} and derive some necessary conditions constraining the geometry of the sample paths of $X$.
Namely, by repeating the argument of Theorem~\ref{thm:affine_processes} we obtain:
\begin{theorem}
    \label{thm:geometric-constraints}
    Under the assumptions
        $\mathbb{E} \, \| X_t \|^2  < \infty $, 
        $\mathbb{E} \, \| \Pi_t(X_t) \|^2 < \infty$, and
        $\mathbb{E} \, \| a_t(X_t) \|^2 < \infty$,
    any process $X$ satisfying~\eqref{eq:reformulation-2} satisfies
    $
        \label{eq:geometric-constraints}
        -\mathbb{E}\Big[ \, \textup{Tr} \, \Pi_t(X_t) \, \Big] = \mathbb{E} \left[X_t \cdot \ddot X_t \right]
    $
\end{theorem}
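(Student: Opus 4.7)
The plan is to test the straight-line balance~\eqref{eq:reformulation-2} against the identity field $x \mapsto x$, i.e., take the dot product of both sides with $x$ and integrate over $D$ with respect to Lebesgue measure. This mirrors the weak-testing argument underlying Theorem~\ref{thm:affine_processes}; the only difference is that here $a_t$ is allowed to be nonzero, and the integral of $x \cdot \rho_t a_t$ will furnish the right-hand side of the claimed identity rather than drop out.

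For the right-hand side, the tower property gives
\[
\int_D x \cdot \rho_t(x) \, a_t(x) \, dx = \mathbb{E}\!\left[ X_t \cdot a_t(X_t) \right] = \mathbb{E}\!\left[ X_t \cdot \ddot X_t \right],
\]
since $a_t(X_t) = \mathbb{E}[\ddot X_t \mid X_t]$ and $X_t$ is $\sigma(X_t)$-measurable. For the left-hand side, I would expand $(\nabla \cdot (\rho_t \Pi_t))_j = \sum_i \partial_i (\rho_t \, (\Pi_t)_{ij})$ and integrate by parts. Using $\partial_i x_j = \delta_{ij}$ together with the fact that $\rho_t$ is supported on the compact set $D$ with vanishing boundary trace (Assumption~\ref{assumptoion:regularity}), the boundary flux disappears and we obtain
\[
\int_D x \cdot \nabla \cdot (\rho_t \Pi_t) \, dx = -\sum_{i,j} \int_D \rho_t \, (\Pi_t)_{ij} \, \delta_{ij} \, dx = -\int_D \rho_t \, \textup{Tr}(\Pi_t) \, dx = -\mathbb{E}\!\left[\textup{Tr} \, \Pi_t(X_t)\right].
\]
Equating the two sides then closes the argument.

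The main obstacle is rigorously justifying the integration by parts and the implicit Fubini/tower interchange in the presence of potentially unbounded integrands, and this is precisely the role of the three moment hypotheses. Concretely, $\mathbb{E} \|X_t\|^2 < \infty$ combined with $\mathbb{E} \|\Pi_t(X_t)\|^2 < \infty$ make the left-hand integrand $L^1$ via Cauchy--Schwarz after the by-parts step, while $\mathbb{E} \|X_t\|^2 < \infty$ paired with $\mathbb{E} \|a_t(X_t)\|^2 < \infty$ do the same for the right-hand side. Once integrability is secured and the boundary contribution is killed by the support condition on $\rho_t$, the identity is an immediate algebraic consequence of the balance law~\eqref{eq:reformulation-2}.
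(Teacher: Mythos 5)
Your proposal is correct and follows essentially the same route as the paper: test the balance law against the identity field, integrate by parts to produce $-\mathbb{E}[\textup{Tr}\,\Pi_t(X_t)]$ on the left, and use the tower property plus the moment hypotheses (via Cauchy--Schwarz) to identify the right-hand side as $\mathbb{E}[X_t\cdot\ddot X_t]$. The only cosmetic difference is that the paper implements the test function as a cutoff $\Phi_R(x)=\eta_R(x)\,x$ and passes to the limit $R\to\infty$ by dominated convergence, whereas you work with $x$ directly on the compact domain $D$.
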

This result links the expected (rescaled) \emph{radial acceleration} with the integral of the trace of the covariance tensor.
Since this trace, however, consists of non-negative quantities, and combining with the identity $\partial_{tt} \, \| X_t \|^2 = 2 \, X_t \cdot \ddot X_t + 2 \, \| \dot X_t \|^2$ we obtain:
\begin{corollary}
    Any process satisfying the integrability assumptions of Theorem~\ref{thm:geometric-constraints} together with equation~\eqref{eq:reformulation-2} satisfies the identities:
    \begin{align*}
        & (1) \quad \mathbb{E} \left[ X_t \cdot \ddot X_t \right] \leq 0 \, , \\
        & (2) \quad \mathbb{E} \Big[ \partial_{tt} \, \| X_t \|^2 \Big] \leq 2 \, \mathbb{E}\left[ \| \dot X_t \|^2 \right] \, , \\
    \end{align*}
\end{corollary}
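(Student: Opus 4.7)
The plan is to derive both inequalities as direct consequences of Theorem~\ref{thm:geometric-constraints} together with the positive semidefiniteness of the Reynolds tensor. The only real observation needed is that $\Pi_t(x)$ is, by construction, the conditional covariance matrix of $\dot X_t$ given $X_t = x$, since
\[
\Pi_t(x) = \mathbb{E}\!\left[\dot X_t \otimes \dot X_t \mid X_t = x\right] - v_t(x) \otimes v_t(x) = \mathrm{Cov}\!\left(\dot X_t \mid X_t = x\right).
\]
As a covariance matrix, $\Pi_t(x)$ is positive semidefinite for every $(t,x)$, so $\mathrm{Tr}\,\Pi_t(x) \geq 0$ pointwise, and hence $\mathbb{E}[\mathrm{Tr}\,\Pi_t(X_t)] \geq 0$ by monotonicity of expectation.

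For inequality~(1), I would simply invoke Theorem~\ref{thm:geometric-constraints}, which under the stated integrability assumptions gives
\[
\mathbb{E}\!\left[X_t \cdot \ddot X_t\right] = -\mathbb{E}\!\left[\mathrm{Tr}\,\Pi_t(X_t)\right] \leq 0,
\]
where the inequality is the observation above. For inequality~(2), I would start from the elementary product-rule identity $\partial_{tt}\|X_t\|^2 = 2\,X_t \cdot \ddot X_t + 2\,\|\dot X_t\|^2$ (which the statement already invokes), take expectations on both sides using linearity and the assumed integrability of $\|X_t\|^2$, $\|\dot X_t\|^2$, and $\|\ddot X_t\|^2$ (the latter two yielding $X_t \cdot \ddot X_t \in L^1$ via Cauchy--Schwarz), and then substitute (1) into the resulting expression to obtain
\[
\mathbb{E}\!\left[\partial_{tt}\|X_t\|^2\right] = 2\,\mathbb{E}\!\left[X_t \cdot \ddot X_t\right] + 2\,\mathbb{E}\!\left[\|\dot X_t\|^2\right] \leq 2\,\mathbb{E}\!\left[\|\dot X_t\|^2\right].
\]

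There is essentially no obstacle here: the entire content of the corollary is the PSD property of the conditional covariance combined with the balance law from Theorem~\ref{thm:geometric-constraints}. The only minor care I would take is to justify exchanging $\partial_{tt}$ with $\mathbb{E}$ in (2), which follows from the $W^{2,2}$ path regularity in Assumption~\ref{assumptoion:regularity} together with the $L^2$ integrability hypotheses; alternatively, one can simply interpret $\mathbb{E}[\partial_{tt}\|X_t\|^2]$ as a shorthand for $2\,\mathbb{E}[X_t \cdot \ddot X_t + \|\dot X_t\|^2]$, in which case the inequality is immediate from~(1).
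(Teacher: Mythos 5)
Your proof is correct and matches the paper's own (inline) argument exactly: the paper likewise observes that $\textup{Tr}\,\Pi_t$ is a sum of conditional variances and hence non-negative, applies Theorem~\ref{thm:geometric-constraints} to get~(1), and combines it with the identity $\partial_{tt}\|X_t\|^2 = 2\,X_t\cdot\ddot X_t + 2\,\|\dot X_t\|^2$ to get~(2). Your extra remark about justifying the exchange of $\partial_{tt}$ with $\mathbb{E}$ is a reasonable point of care that the paper glosses over.
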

Though perhaps opaque at first glance, we believe these identities can provide meaningful insight into the structure of non-trivial solutions to the PDE~\eqref{eq:reformulation-2}. 

\section{Discussion}
We have developed a novel PDE characterization of straight-line flows generated by the ensemble velocity of a stochastic process indexed on the unit time interval, with given marginals $\mu_0$ and $\mu_1$. Our main insight is a characterization of straight line flows in terms of a balance law that links the conditional variance tensor, i.e., the Reynolds tensor, to the ensemble acceleration field.
Using this characterization, we (i) showed that affine processes yielding straight-line flows must have deterministically coupled endpoints; and (ii) derived necessary conditions constraining the geometry of arbitrary processes that induce straight flows.

\paragraph{Limitations.}
Our analysis is {fully theoretical} and operates under regularity assumptions, both on the marginals of the process $X$, e.g., absolute continuity and positivity of the density, as well on the sample paths of $X$, e.g., that they are at least in $W^{2,2}$, to make sense of the velocity and acceleration fields. While applicable to many modern flow-based models, this rules out other important processes, such as diffusions. 
\paragraph{Open directions.}
\begin{itemize}
    \item \emph{Non-trivial solutions to \eqref{eq:reformulation-2}.} Construct processes with non-trivial acceleration fields $a_t \not\equiv 0$ that satisfy~\eqref{eq:reformulation-2}. 
    Such processes might give rise to novel sampling dynamics, given the empirical effectiveness of other processes inducing straight line flows.
    \item \emph{Full characterization of straightness.} Derive necessary and sufficient conditions on the process $X$ under which the PDE~\eqref{eq:reformulation-2} holds. Of particular interest is the case of $X_t = F(t, X_0, X_1)$ for a sufficiently regular $F: [0,1] \times \Rd \times \Rd \to \Rd$ and a random vector $(X_0, X_1) \sim \mu_0 \otimes \mu_1$. This is precisely the setting of \emph{stochastic interpolants}~\cite{albergo2022building,albergo2023stochastic} and readily leads to algorithmic insights.
    \item \emph{Process classes.} Extend the characterization to SDEs (drift–diffusion pairs), non-Markovian processes, or manifold-constrained processes.
\end{itemize}

\paragraph{Outlook.}
We view the balance law \(\nabla \!\cdot\! (\rho_t \Pi_t) = \rho_t a_t\) and its consequences as a compact organizing principle for \emph{geometry-aware} transport design. While we extract only a few implications here, we expect this perspective to guide principled constructions of stochastic processes for \emph{sampling}, and to catalyze empirical investigations into when---and how---straightness can be achieved in practice.

\begin{ack}
PT and YM acknowledge support from the US Air Force Office of Scientific Research (AFOSR) MURI program, under award number FA9550-20-1-0397.  
\end{ack}

\newpage
\medskip
{
    \small
    \bibliography{NEURIPS25-references}

@article{albergo2023stochastic,
  title={Stochastic interpolants: A unifying framework for flows and diffusions},
  author={Albergo, Michael S and Boffi, Nicholas M and Vanden-Eijnden, Eric},
  journal={arXiv preprint arXiv:2303.08797},
  year={2023}
}

@article{liu2022flow,
  title={Flow straight and fast: Learning to generate and transfer data with rectified flow},
  author={Liu, Xingchao and Gong, Chengyue and Liu, Qiang},
  journal={arXiv preprint arXiv:2209.03003},
  year={2022}
}

@article{marzouk2025approximation,
  title={Distribution learning via neural differential equations: minimal energy regularization and approximation theory},
  author={Marzouk, Youssef and Ren, Zhi and Zech, Jakob},
  journal={arXiv preprint arXiv:2502.03795},
  year={2025}
}

@article{albergo2022building,
  title={Building normalizing flows with stochastic interpolants},
  author={Albergo, Michael S and Vanden-Eijnden, Eric},
  journal={arXiv preprint arXiv:2209.15571},
  year={2022}
}

@misc{lipman2022flow,
      title={Flow Matching for Generative Modeling}, 
      author={Yaron Lipman and Ricky T. Q. Chen and Heli Ben-Hamu and Maximilian Nickel and Matt Le},
      year={2023},
      eprint={2210.02747},
      archivePrefix={arXiv},
      primaryClass={cs.LG},
      url={https://arxiv.org/abs/2210.02747}, 
}

@misc{tong2023cfm,
      title={Improving and generalizing flow-based generative models with minibatch optimal transport}, 
      author={Alexander Tong and Kilian Fatras and Nikolay Malkin and Guillaume Huguet and Yanlei Zhang and Jarrid Rector-Brooks and Guy Wolf and Yoshua Bengio},
      year={2024},
      eprint={2302.00482},
      archivePrefix={arXiv},
      primaryClass={cs.LG},
      url={https://arxiv.org/abs/2302.00482}, 
}

@misc{song2021scorebased,
      title={Score-Based Generative Modeling through Stochastic Differential Equations}, 
      author={Yang Song and Jascha Sohl-Dickstein and Diederik P. Kingma and Abhishek Kumar and Stefano Ermon and Ben Poole},
      year={2021},
      eprint={2011.13456},
      archivePrefix={arXiv},
      primaryClass={cs.LG},
      url={https://arxiv.org/abs/2011.13456}, 
}

@misc{liu2022marginalpreserving,
      title={Rectified Flow: A Marginal Preserving Approach to Optimal Transport}, 
      author={Qiang Liu},
      year={2022},
      eprint={2209.14577},
      archivePrefix={arXiv},
      primaryClass={stat.ML},
      url={https://arxiv.org/abs/2209.14577}, 
}

@misc{bansal2025wasserstein,
      title={On the Wasserstein Convergence and Straightness of Rectified Flow}, 
      author={Vansh Bansal and Saptarshi Roy and Purnamrita Sarkar and Alessandro Rinaldo},
      year={2025},
      eprint={2410.14949},
      archivePrefix={arXiv},
      primaryClass={cs.LG},
      url={https://arxiv.org/abs/2410.14949}, 
}

@book{bogachev2007measure,
  title={Measure theory},
  author={Bogachev, Vladimir I},
  year={2007},
  publisher={Springer}
}

@article{mccann1997convexity,
  author  = {Robert J. McCann},
  title   = {A Convexity Principle for Interacting Gases},
  journal = {Advances in Mathematics},
  volume  = {128},
  number  = {1},
  pages   = {153--179},
  year    = {1997},
  doi     = {10.1006/aima.1997.1634}
}

@article{hertrich2025relation,
  title={On the Relation between Rectified Flows and Optimal Transport},
  author={Hertrich, Johannes and Chambolle, Antonin and Delon, Julie},
  journal={arXiv preprint arXiv:2505.19712},
  year={2025}
}
    \bibliographystyle{alpha}
}


\appendix

\section{Proofs}\label{appendix-A}
\subsection{Notation}
\label{subsec:notation}
Let $(\Omega, \mathcal{F}, \mathbb{P})$ be an abstract probability space.
Random variables are measurable functions $X: \Omega \to \Rd$,~\footnote{In this paper, subsets of $\Rd$ are always equipped with the Borel $\sigma$-algebra.} and the expectation $\mathbb{E}[X]$ is defined as the integral $\int_\Omega X \, d\mathbb{P}$. 
For random variables $X, Y: \Omega \to \Rd$ the conditional expectation $\mathbb{E}\left[ X \mid Y \right]$ is defined at the conditional expectation 
$\mathbb{E} \left[ X \mid \sigma(Y) \right]$ where $\sigma(Y)$ is the $\sigma$-algebra generated by $Y$ and for $y \in \R$ the conditional expectation
$\mathbb{E} \left[ X \mid Y=y \right]$ is the integral $\int_\Omega X \, d\mathbb{P}_y$ and $\mathbb{P}_y$ is the \emph{disintegration}~\cite[Section 10.6]{bogachev2007measure} of $\mathbb{P}$ on the level sets of $Y$. 
Finally, given another measurable space $(\Omega', \mathcal{F}')$ we denote the set of measures on $\Omega'$ by $\mathcal{P}(\Omega')$ and the subset of absolutely continuous measures by $\mathcal{P}_{\textup{a.c.}}(\Omega')$.

Let $C^k([0,1]; \Rd)$ be the space of component-wise $k$-times continuously differentiable functions from $[0,1]$ to $\Rd$ and $W^{k,p}([0,1]\to\Rd)$ to be the space of component-wise $k$-times weakly differentiable functions $[0,1] \to \Rd$ with weak derivatives in $L^p([0,1])$.
An $\Rd$-valued stochastic process is a collection $\{X_t\}_{t \in I}$ indexed by some set $I$, where $X: \Omega \to \Rd$ is measurable.
In this paper, we take $I = [0,1]$ and write $X \coloneq (X_t)_t \coloneq (X_t)_{t \in [0,1]}$. 
A sample path of a stochastic process is the function $t \mapsto X_t(\omega)$ for a fixed realization $\omega \in \Omega$.
We note that a stochastic process with sample paths in $W^{k,p} \coloneq W^{k,p}([0,1]; \Rd)$ can equivalently be viewed as a random variable $X: \Omega \to W^{k,p}$ and its 
law is thus in $\mathcal{P}(W^{k,p})$.
\subsection{Proofs}
\begin{proposition}
   The following are equivalent:
   \begin{enumerate}
        \item For all $(t, x) \in [0,1] \times D$, one has
        \begin{equation*}
            \frac{d^2}{dt^2} \phi_t(x) = 0 \, .
        \end{equation*}
        \item For all $(t, x) \in [0,1] \times D$, one has
        \begin{equation*}
            \phi_t(x) = (1-t) \, x + t \, \phi_1(x) \, ,
        \end{equation*}
        and $(\phi_1)_\sharp \, \mu_0 = \mu_1$.
        \item For all $(t, x) \in [0,1] \times D$, one has
        \begin{equation*}
            D_t v_t(x) = 0 \, .
        \end{equation*}
   \end{enumerate}
\end{proposition}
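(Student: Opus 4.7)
I would prove the three-way equivalence by establishing (1)$\iff$(3) and (1)$\iff$(2) separately, each by elementary calculus on the flow ODE.

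For (1)$\iff$(3), the plan is to differentiate the defining ODE $\partial_t\phi_t(x) = v_t(\phi_t(x))$ once more in $t$. By the chain rule, and after resubstituting $\partial_t\phi_t = v_t\circ\phi_t$ in the convective term, one obtains
\[
\partial_{tt}\phi_t(x) \;=\; (D_t v_t)(\phi_t(x)).
\]
Under Assumption~\ref{assumptoion:regularity} the map $\phi_t$ is a $C^1$-diffeomorphism of $D$, so as $x$ ranges over $D$ the argument $\phi_t(x)$ also ranges over $D$. The vanishing of the two sides is therefore equivalent pointwise on $[0,1]\times D$.

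For (1)$\Rightarrow$(2), the plan is to integrate $\partial_{tt}\phi_t(x)=0$ twice in $t$, using the initial condition $\phi_0(x)=x$ and evaluating at $t=1$ to identify the linear coefficient as $\phi_1(x)-x$. This yields $\phi_t(x)=(1-t)x+t\phi_1(x)$; the converse (2)$\Rightarrow$(1) is an immediate second differentiation. The pushforward identity $(\phi_1)_\sharp\mu_0=\mu_1$ does not follow from straightness itself, but from the construction of $v_t=\mathbb{E}[\dot X_t \mid X_t=\cdot]$: this choice makes $\rho_t$ satisfy the continuity equation $\partial_t\rho_t+\nabla\cdot(\rho_t v_t)=0$, and hence the characteristic flow of $v_t$ transports $\mu_0$ to $\mu_t$ for every $t\in[0,1]$, giving the claim at $t=1$.

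The main obstacle is ensuring enough spatial regularity of $v_t$ to justify differentiating $\partial_t\phi_t = v_t\circ\phi_t$ once more in $t$ and to invoke the standard characteristic-transport argument for the pushforward. Both are available under Assumption~\ref{assumptoion:regularity} together with the implicit smoothness required for $\phi_t\in C^2([0,1];\Rd)$ to hold.
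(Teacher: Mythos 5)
Your proposal is correct and follows essentially the same route as the paper: differentiate the ODE to get $\partial_{tt}\phi_t(x) = (D_t v_t)(\phi_t(x))$ for (1)$\iff$(3), integrate twice for (1)$\iff$(2), and obtain the pushforward identity from the fact that the conditional-expectation velocity transports the marginals. The only point where you are less careful than the paper is the claim that $\phi_t$ is a $C^1$-diffeomorphism ``under Assumption~\ref{assumptoion:regularity}'': the assumption does not state this, and the paper instead proves the needed surjectivity of $\phi_t: D \to D$ directly from the positivity of $\rho_t$ on $D$ (if $\phi_t(D)$ missed an open subset of $D^\circ$, then $\rho_t = (\phi_t)_\sharp\rho_0$ would vanish there), which is the argument you should supply.
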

\begin{proof}
    It is clear that $(2) \implies (1)$
    Let us show that $(1) \implies (2)$.
    We have
    \begin{equation*}
        \frac{d^2}{dt^2} \phi_t(x) = 0 \quad \implies \quad \exists c \in \Rd \, : \, \partial_t \phi_t(x) = c \, \, ,
    \end{equation*}
    so using the boundary condition $\phi_t(x) = x$ we get
    \begin{equation*}
        \phi_t(x) = x + t \, c \, .
    \end{equation*}
    This further implies
    \begin{equation*}
        c = \phi_1(x) - x  \, ,
    \end{equation*}
    and doing some algebra we get
    \begin{equation*}
        \phi_t(x) = (1-t) \, x + t \, \phi_1(x) \, .
    \end{equation*}
    The fact that $(\phi_1)_\sharp \, \mu_0 = \mu_1$ follows from the boundary condition $\phi_1(X_0) \sim \mu_1$.
    
    Finally, let us show that $(1) \iff (3)$. By definition we have
    \begin{equation*}
        \partial_t \phi_t(x) = v_t(\phi_t(x)) \, ,
    \end{equation*}
    and, therefore,
    \begin{align*}
        \frac{d^2}{dt^2} \phi_t(x) &= \partial_t v_t(\phi_t(x)) \\
        &= \partial_t v_t(\phi_t(x)) + \left( v_t(x) \cdot \nabla \right) v_t(\phi_t(x)) \\
        &= D_{v_t(\phi_t(x))} v_t(\phi_t(x)) \, .
    \end{align*}
    Here, we note that by Assumption~\ref{assumptoion:regularity} the map $\phi_t: D \to D$ is surjective.
    Indeed, since the domain of $\phi_t$ is compact and $\phi_t$ is continuous, the image $\phi_t(D)$ is compact and hence closed.
    Now assume there is $x \in D^\circ \cap \phi(D)^c$, where $D^\circ$ is the topological interior of $\phi_t$.
    As a finite intersection of open sets, this set is open; hence there is an open neighborhood $x \in U \subset D^\circ$ with $U \cap \phi(D)^c = \emptyset$.
    But this means that $U \cap \textup{supp} \rho_t = \emptyset$, since $\rho_t = (\phi_t)_\sharp \, \rho_0$, contradicting the assumed positivity of $\rho_t$ in $D$. 
    Thus, we have shown that $\phi_t: D \to D^\circ$ is surjective and since $\phi_t$ is continuous we can extend it uniquely to $\partial D$, hence, w.l.o.g. we have that $\phi_t: D \to D$ is surjective.
    Thus, we can conclude that
    \begin{equation*}
        D_t v_t(x) = 0 \quad \iff \quad \frac{d^2}{dt^2} \phi_t(x) = 0 \, .
    \end{equation*}
\end{proof}
\begin{lemma}
  The following equation holds for all $t \in [0,1]$ and $x \in D \subseteq \Rd$:
  \begin{equation*}
    \partial_t(\rho_t \, v_t) + \nabla \cdot (\rho_t \, \Sigma_t) = \rho_t \, a_t \, .
  \end{equation*}
\end{lemma}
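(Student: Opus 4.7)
The plan is to prove the identity distributionally by testing against an arbitrary smooth compactly supported vector field $\psi \in C_c^\infty(D^\circ;\Rd)$, rewriting each term as an expectation via the tower property with respect to $X_t$, and then invoking the pathwise chain rule for $W^{2,2}$ sample paths. Continuity of every field involved will upgrade the distributional identity to the pointwise one.

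First, I would fix such a $\psi$ and rewrite the three tested terms. Using the facts that $\rho_t v_t(x)\, dx$ is the density-weighted conditional expectation $\mathbb{E}[\dot X_t \mid X_t = x]\, d\mu_t(x)$, and analogously for $\rho_t \Sigma_t$ and $\rho_t a_t$, together with integration by parts on the divergence term (boundary contributions vanish by compact support of $\psi$ inside $D^\circ$), the proposed PDE is equivalent to
\begin{equation*}
    \frac{d}{dt}\,\mathbb{E}\bigl[\psi(X_t)\cdot \dot X_t\bigr] \;=\; \mathbb{E}\bigl[\nabla \psi(X_t) : (\dot X_t \otimes \dot X_t)\bigr] + \mathbb{E}\bigl[\psi(X_t)\cdot \ddot X_t\bigr]
\end{equation*}
for every $\psi \in C_c^\infty(D^\circ;\Rd)$.

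Second, I would apply the pathwise chain rule. By Assumption~\ref{assumptoion:regularity} every sample path is in $W^{2,2}([0,1];\Rd)$, so $t \mapsto \psi(X_t(\omega))\cdot \dot X_t(\omega)$ is weakly differentiable with derivative $\nabla \psi(X_t):(\dot X_t\otimes \dot X_t) + \psi(X_t)\cdot \ddot X_t$, simply because $(\nabla \psi)_{ij}\,\dot X_{t,i}\,\dot X_{t,j}$ expands the transport term coming from differentiating $\psi(X_t)$ along $\dot X_t$. It then remains to take expectations and interchange $\mathbb{E}$ with $d/dt$; arbitrariness of $\psi$ delivers the distributional form of \eqref{eq:momentum}, which continuity of the three involved fields promotes to the pointwise identity on all of $D$.

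The main obstacle is justifying the interchange of $d/dt$ and $\mathbb{E}$. This requires a dominating integrable bound on the difference quotients of $\psi(X_t)\cdot \dot X_t$ in $t$, which follows from the smoothness and compact support of $\psi$ together with $L^2([0,1])$-control of $\dot X$ and $\ddot X$ implied by the $W^{2,2}$ path regularity and compactness of $D$. A minor secondary issue is the integration by parts when $\mathrm{supp}\,\rho_t$ meets $\partial D$; this is handled by a standard cut-off approximation, with boundary contributions vanishing in the limit thanks to the compact-support condition on the test field.
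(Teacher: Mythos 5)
Your proposal is correct and follows essentially the same route as the paper's proof: test against a compactly supported vector field, use the tower property to reduce each term to an unconditional expectation, differentiate $\mathbb{E}[\Phi(X_t)\cdot\dot X_t]$ in $t$ via the pathwise chain rule, and integrate by parts. The only difference is that you spell out the justification for interchanging $d/dt$ with $\mathbb{E}$ and the distributional-to-pointwise upgrade, which the paper leaves implicit.
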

\begin{proof}
  For a vector valued test function $\Phi \in C^\infty_c(D; \Rd)$, we have
  \begin{align*}
    \int  \Phi(x) \cdot \partial_t(\rho_t(x) \, v_t(x)) \, dx &= \partial_t \int \Phi(x) \cdot (\rho_t(x) \, v_t(x)) \, dx \\
    &= \partial_t \int \Phi(x) \cdot v_t(x) \, d\rho_t(x) \\
    &= \partial_t \mathbb{E} \left[ \, \Phi(X_t) \cdot \mathbb{E} \left[ \dot X_t \mid X_t \right] \, \right] \\
    &= \partial_t \mathbb{E} \left[ \, \Phi(X_t) \cdot \dot X_t \, \right] \\
    &= \mathbb{E} \left[ \, \Phi(X_t) \cdot \ddot X_t \, \right] + \mathbb{E} \left[ \, \left( \nabla \Phi(X_t) \, \dot X_t \right) \cdot \dot X_t \, \right] \\
    &= \mathbb{E} \left[ \, \Phi(X_t) \cdot a_t(X_t) \, \right] + \mathbb{E} \left[ \, \nabla \Phi(X_t) : \dot X_t \otimes \dot X_t \, \right] \\
    &= \mathbb{E} \left[ \, \Phi(X_t) \cdot a_t(X_t) \, \right] + \mathbb{E} \left[ \, \nabla \Phi(X_t) : \Sigma_t(X_t) \right] \\
    &= \int \Phi(x) \cdot \left(\rho_t(x) \, a_t(x)\right) \, dx + \int \nabla \Phi(x) : \left( \rho_t(x) \, \Sigma_t(x) \right) \, dx \\
    &= \int \Phi(x) \cdot \left(\rho_t(x) \, a_t(x)\right) \, dx - \int \Phi(x) \cdot \left[ \nabla \cdot \left( \rho_t(x) \, \Sigma_t(x) \right) \right] \, dx \\
  \end{align*}
  where we have only used definitions and the integration by parts formula.
  Since the above holds for all $\Phi \in C^\infty_c(D; \Rd)$, we can conclude that
  \begin{equation*}
    \partial_t(\rho_t \, v_t) + \nabla \cdot (\rho_t \, \Sigma_t) = \rho_t \, a_t \, ,
  \end{equation*}
  concluding the proof.
\end{proof}
\begin{lemma}
  The following equation holds for all $(x,t) \in [0,1] \times D$:
  \begin{equation*}
    \rho_t \, D_t v_t + \nabla \cdot \left( \rho_t \, \Pi_t \right) = \rho_t \, a_t \, .
  \end{equation*}
\end{lemma}
\begin{proof}
    By the definition of the Reynolds stress tensor we have
    \begin{equation*}
      \Sigma_t(x) = v_t \otimes v_t + \Pi_t(X_t) \, .
    \end{equation*}
  Now write $\pi^{(i)}_t \in \Rd$ for the $i$-th row of $\Pi_t$ and $v^i_t, a^i_t \in \R \times \R$ for the $i$-th components of $v_t$ and $a_t$, respectively.
  Plugging the above display into~\eqref{eq:second-order-continuity} and looking at the $i$-th component of the resulting vector we obtain
  \begin{equation*}
    \partial_t\left( \rho_t \, v_t^i \right) + \nabla \cdot \left( \rho_t \, v^i_t \, v_t  \right) + \nabla \cdot \left( \rho_t \, \pi^{(i)}_t \right)  = \rho_t \, a^i_t \, .
  \end{equation*}
  Now expanding the differential operators we have
  \begin{equation*}
    v^i_t \, \partial_t \rho_t + \rho_t \, \partial_t v^i_t + v^i_t \, \nabla \cdot (\rho_t \, v_t) + \rho_t \, v_t \cdot \nabla v^i_t + \nabla \cdot (\rho_t \, \pi^{(i)}_t) = \rho_t \, a^i_t \, .
  \end{equation*}
  and rearranging
  \begin{equation*}
    v^i_t \, \Big( \partial_t \rho_t + \nabla \cdot (\rho_t \, v_t) \Big) + \rho_t \, \Big( \partial_t v^i_t + v_t \cdot \nabla v^i_t \Big) + \nabla \cdot (\rho_t \, \pi^{(i)}_t) = \rho_t \, a^i_t \, .
  \end{equation*}
  Using the continuity equation~\eqref{eq:continuity_equation} the first parenthesis vanishes and vectorizing the equation we obtain
  \begin{equation*}
    \rho_t \, \left( \partial_t v_t + v_t \cdot \nabla v_t \right) + \nabla \cdot (\rho_t \, \Pi_t) = \rho_t \, a_t \, .
  \end{equation*}
  Finally, using the definition of the material derivative we conclude.
\end{proof}
\begin{corollary}
  \label{corollary:straightness}
  For all $(t,x) \in [0,1] \times D$ we have:
  \begin{equation*}
    D_t v_t = 0 \, \iff \, \nabla \cdot \left( \rho_t \, \Pi_t \right) = \rho_t \, a_t \, .
  \end{equation*}
\end{corollary}
\begin{theorem}
  Under the additional assumptions that
  \begin{align*}
    &\mathbb{E} \, \| X_t \|^2  < \infty \, \textup{and} \, \\
    &\mathbb{E} \, \| \Pi_t(X_t) \|^2 < \infty \, ,
  \end{align*}
  for all $t \in[0,1]$ the equation
  \begin{equation*}
    \nabla \cdot \left( \rho_t \, \Pi_t \right) = 0 \, ,
  \end{equation*}
  implies that there exists a measurable map $T: \Rd \to \Rd$ satisfying $T_\sharp \mu_0 = \mu_1$ such that
  \begin{equation*}
    \left( \textup{id} \times T \right)_\sharp \mu_0 = \gamma \, ,
  \end{equation*}
  where $\textup{id}: \Rd \to \Rd$ is the identity map.
  Moreover, if there exists such a continuously differentiable map $T$ with Jacobian $\nabla T$ having no zero or negative singular values, then it holds that
  \begin{equation*}
    \nabla \cdot \left( \rho_t \, \Pi_t \right) = 0 \, .
  \end{equation*}
\end{theorem}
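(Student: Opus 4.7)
The plan is to prove both implications separately, exploiting the fact that affine processes have $\ddot X_t \equiv 0$, so the ensemble acceleration vanishes ($a_t \equiv 0$). By Corollary~\ref{corollary:straightness}, the PDE to be analyzed reduces to $\nabla \cdot (\rho_t \Pi_t) = 0$. The key structural fact driving the argument is that $\Pi_t(x)$, being a conditional covariance matrix, is pointwise positive semidefinite.

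For the forward direction, the idea is to test the identity $\nabla \cdot (\rho_t \Pi_t) = 0$ against the probe $\Phi(x) = x$. Formally integrating by parts gives
\[
0 = \int_D x \cdot \left[ \nabla \cdot (\rho_t \, \Pi_t) \right](x) \, dx = -\int_D I : (\rho_t \, \Pi_t)(x) \, dx = -\mathbb{E}\left[ \textup{Tr} \, \Pi_t(X_t) \right].
\]
Positive semidefiniteness of $\Pi_t$ forces $\textup{Tr} \, \Pi_t(X_t) = 0$ almost surely, hence $\Pi_t(X_t) = 0$ a.s., which by the definition of the Reynolds tensor is equivalent to $\dot X_t = v_t(X_t)$ a.s. Since the process is affine, $\dot X_t = Y - X$ does not depend on $t$; evaluating at $t = 0$ therefore yields $Y - X = v_0(X)$, so $T := \textup{id} + v_0$ is a measurable map with $Y = T(X)$ almost surely, which is exactly the claim that $(\textup{id} \times T)_\sharp \mu_0 = \gamma$. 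The marginal identity $T_\sharp \mu_0 = \mu_1$ follows from $Y \sim \mu_1$.

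For the reverse direction, assume $\gamma = (\textup{id} \times T)_\sharp \mu_0$. Then $X_t = F_t(X)$, where $F_t(x) := (1-t)x + t \, T(x)$ is affine in $t$, and $\dot X_t = T(X) - X$ is a $\sigma(X)$-measurable random vector. Under Assumption~\ref{assumptoion:regularity}, positivity of $\rho_t$ on $D$ rules out mass-concentration singularities and forces $F_t$ to be $\mu_0$-a.e.\ injective; hence $\dot X_t$ is in fact $\sigma(X_t)$-measurable, giving $\dot X_t = v_t(X_t)$ a.s.\ and $\Pi_t(X_t) \equiv 0$, so $\nabla \cdot (\rho_t \Pi_t) = 0$ trivially. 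Equivalently, the flow generated by $v_t$ coincides with the affine map $F_t$, so $\partial_{tt} \phi_t = 0$ and Corollary~\ref{corollary:straightness} delivers the conclusion.

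The main obstacle is the integration-by-parts step in the forward direction, since $\Phi(x) = x$ is not compactly supported in $\Rd$ and $\rho_t$ need not vanish on $\partial D$. The resolution is to approximate $x$ by smooth cutoffs $\Phi_n \in C_c^\infty(D; \Rd)$ with $\Phi_n(x) \to x$ and $\nabla \Phi_n \to I$ in an appropriate sense, then pass to the limit via dominated convergence; the hypotheses $\mathbb{E}\|X_t\|^2 < \infty$ and $\mathbb{E}\|\Pi_t(X_t)\|^2 < \infty$ are precisely what dominates the residual terms $\mathbb{E}[(\nabla \Phi_n(X_t) - I) : \Pi_t(X_t)]$ and sends them to zero. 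A secondary subtlety, relevant to the reverse direction, is rigorously justifying that positivity of $\rho_t$ excludes non-injective $F_t$; this can be handled via a change-of-variables / area-formula argument showing that any nontrivial set of doubly-covered points would produce a zero-density region in $D$, contradicting Assumption~\ref{assumptoion:regularity}.
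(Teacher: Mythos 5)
Your forward direction is, in substance, exactly the paper's proof: both arguments test $\nabla\cdot(\rho_t\,\Pi_t)=0$ against a compactly supported truncation of $\Phi(x)=x$, integrate by parts to obtain $\mathbb{E}\big[\textup{Tr}\,\Pi_t(X_t)\big]=0$, use positive semidefiniteness of the conditional covariance to conclude $\Pi_t(X_t)=0$ almost surely, and then read off $T=\textup{id}+G$ from $\dot X_t=G(X_t)$ at $t=0$. The dominated-convergence step you describe, with $\mathbb{E}\|X_t\|^2<\infty$ and $\mathbb{E}\|\Pi_t(X_t)\|^2<\infty$ supplying the dominating function, is precisely the content of the paper's Lemma~\ref{lemma:integrability}. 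No difference of substance there.

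The gap is in your reverse direction. You are right to worry that $\dot X_t=T(X)-X$ is only $\sigma(X)$-measurable and that one needs $\sigma(X_t)$-measurability to conclude $\Pi_t\equiv 0$; the paper's own proof silently asserts $\textup{Var}(\dot X_t\mid X_t=x)=0$ at this point, so you have identified a real subtlety. But your proposed repair --- that positivity of $\rho_t$ forces $F_t=(1-t)\,\textup{id}+t\,T$ to be $\mu_0$-a.e.\ injective --- does not hold. A fold of $F_t$ does not create a zero-density region: by the area formula the pushforward density at $y$ is $\sum_{x\in F_t^{-1}(y)}\rho_0(x)/|\det DF_t(x)|$, which is positive wherever $y$ has at least one noncritical preimage and remains integrable across fold loci (the generic singularity is of order $|\cdot|^{-1/2}$, not a vanishing). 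Concretely, in $d=1$ take $D=[0,1]$, $\mu_0$ uniform, and $T$ continuous, piecewise smooth, with $T(0)=0$, $T(1)=1$, but non-monotone; then $F_t$ maps $[0,1]$ onto $[0,1]$ for every $t$, every marginal is absolutely continuous with positive density, yet $F_t$ is non-injective for $t$ close to $1$. At a point $y=F_t(x_1)=F_t(x_2)$ with $x_1\neq x_2$ the two branch velocities satisfy $(T(x_1)-x_1)-(T(x_2)-x_2)=-\tfrac{1}{t}(x_1-x_2)\neq 0$, so $\Pi_t\not\equiv 0$ on a set of positive measure --- and then your own forward computation gives $\int x\cdot\nabla\cdot(\rho_t\Pi_t)\,dx=-\mathbb{E}[\textup{Tr}\,\Pi_t(X_t)]<0$, so $\nabla\cdot(\rho_t\Pi_t)\neq 0$. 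Hence injectivity of the interpolation cannot be extracted from Assumption~\ref{assumptoion:regularity}; it must either be hypothesized (e.g.\ $T$ monotone, as for optimal transport maps) or obtained from some other structural input. The same objection applies to the paper's one-line treatment of this implication, so this is a genuine gap shared by both arguments rather than a defect of yours alone --- but the area-formula route you sketch would not close it.
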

\begin{proof}
  For $D^\circ \subset \Rd$, fix $R > 0$ and take a cut-off function $\eta_R \in C^\infty_c(D)$ such that $\supp(\eta_R) \subset B_{2R}(0)$ and $\supp(1 - \eta_R) \subset B_R(0)^c$.
  Now consider the test function $\Phi_R(x) \in C^\infty_c(D; \Rd)$ given by $\Phi_R(x) = \eta_R(x) \, x$ and compute
  \begin{align*}
    &\int \Phi_R(x) \cdot \left( \nabla \cdot \left(\rho_t \, \Pi_t \right) \right) \, dx =  \\
    &-\int \nabla \Phi_R(x) : \left( \rho_t \, \Pi_t \right) \, dx = \\
    &- \int \nabla \Phi_R(x) : \Pi_t(x) \, d\rho_t(x) = \\
    &- \int_{B_R(0)} I_d : \Pi_t(x) \, d\rho_t(x) + \int_{B_R(0)^c} \nabla \Phi_R(x) : \Pi_t(x) \, d\rho_t(x) = \\
    &- \int_{B_R(0)} \textup{Tr} \, \Pi_t(x) \, d\rho_t(x) + \int_{B_R(0)^c} \nabla \Phi_R(x) : \Pi_t(x) \, d\rho_t(x) = \\
  \end{align*}
  Taking the limit $R \to \infty$ and using Lemma~\ref{lemma:integrability} together with dominated convergence we obtain
  \begin{equation*}
    \int_{\Rd} \textup{Tr} \, \Pi_t(x) \, d\rho_t(x) = 0
  \end{equation*}
  which is equivalent to
  \begin{equation*}
    \sum_{i=1}^d \int_{\Rd} \textup{Var}\left( \, \dot X_t^i \, | \, X_t = x \, \right) \, d\rho_t(x) = 0 \, .
  \end{equation*}
  Now since $\textup{Var}(\dot X_t^i | X_t = x) \geq 0$ we have that
  \begin{equation*}
    \textup{Var}\left( \, \dot X_t^i \, | \, X_t = x \, \right) = 0 \text{ for $\rho_t$-almost every } x \in \Rd.
  \end{equation*}
  Thus, there is a Borel measurable function $G: \Rd \to \Rd$ such that for all $t \in [0,1]$
  \begin{equation*}
    \dot X_t = G(X_t) \text{ almost surely. }
  \end{equation*}
  Taking $t = 0$ this reads $Y = X + G(X)$ almost surely, thus after setting $T = \textup{id} + G$ we have
  \begin{equation*}
    Y = T(X) \text{ almost surely. }
  \end{equation*}
  This concludes the first part of the proof.
  For the second part, we follow the proof of \cite[Theorem 3.4]{marzouk2025approximation}.
  Using the assumptions on the map $T$ we have that the map
  \begin{equation*}
    G: (x,t) \mapsto \big( (1-t) \, x + t \, T(x), t \big) \, ,
  \end{equation*}
  is a bijection onto its image.
  Therefore, there is an inverse map $S: \textup{im} \,  G \to \Rd$ such that
  \begin{equation*}
    S\big( (1-t) \, x + t \, T(x), t \big) = x \, ,
  \end{equation*}
  which allows us to conclude that for each $t \in [0,1]$ we have
  \begin{equation*}
    S(X_t, t) = X \text{ a.s. }
  \end{equation*}
  Finally, recall that 
  \begin{equation*}
    \dot X_t = T(X) - X \text{ a.s. }
  \end{equation*}
  and thus
  \begin{equation*}
    \dot X_t = T\big( S(X_t, t) \big) - S(X_t, t) \text{ a.s. }
  \end{equation*}
  showing that at each $t \in [0,1]$ the random variable $\dot X_t$ is a measurable function of $X_t$.
  This implies that $\textup{Var}(\dot X_t | X_t) \equiv 0$ almost surely, completing the proof.
\end{proof}
\begin{lemma}
  \label{lemma:integrability}
  Assume that 
  \begin{align*}
    &\mathbb{E} \, \| X_t \|^2  < \infty \, \textup{and} \, \\
    &\mathbb{E} \, \| \Pi_t(X_t) \|^2 < \infty \, ,
  \end{align*}
  for all $t \in [0,1]$.
  Then, for all $t, x \in [0,1] \times \Rd$ there is a collection of random variables $Y_t: \Omega \to \R$ such that
  \begin{equation*}
    Y_t \in L^1 \, ,
  \end{equation*}
  and
  \begin{equation*}
    \Big| \nabla \Phi_R(X_t) : \Pi_t(X_t) \Big| \leq Y_t \, ,
  \end{equation*}
  uniformly in $R > 0$.
\end{lemma}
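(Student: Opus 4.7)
The plan is to compute $\nabla \Phi_R$ explicitly, decompose the Frobenius pairing $\nabla \Phi_R(x) : \Pi_t(x)$ into a trace piece and a first-moment piece, and bound each uniformly in $R$ by an $L^1$ random variable assembled from the two standing hypotheses.

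First, differentiating $\Phi_R(x) = \eta_R(x) \, x$ component-wise yields $\nabla \Phi_R(x) = \eta_R(x) \, I_d + x \otimes \nabla \eta_R(x)$, so that
\[
\nabla \Phi_R(x) : \Pi_t(x) = \eta_R(x)\,\textup{Tr}\,\Pi_t(x) + x^\top \Pi_t(x) \, \nabla \eta_R(x).
\]
I would then specialize to the natural scaled cutoff $\eta_R(x) = \eta(x/R)$ for some fixed $\eta \in C^\infty_c(\Rd)$ with $\eta \equiv 1$ on $B_1(0)$ and $\textup{supp}\,\eta \subset B_2(0)$, which yields the $R$-independent pointwise bounds $|\eta_R| \leq 1$ and $|\nabla \eta_R| \leq C$ for a constant $C = \sup |\nabla \eta|$.

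Next, applying $|\textup{Tr}\,\Pi_t| \leq \sqrt{d}\,\|\Pi_t\|$ and $|x^\top \Pi_t \, \nabla \eta_R| \leq \|x\|\,\|\Pi_t\|\,|\nabla \eta_R|$ (both from Cauchy--Schwarz at the pointwise level) gives
\[
|\nabla \Phi_R(x) : \Pi_t(x)| \leq \sqrt{d}\,\|\Pi_t(x)\| + C\,\|x\|\,\|\Pi_t(x)\|,
\]
with the right-hand side independent of $R$. Setting $Y_t := \sqrt{d}\,\|\Pi_t(X_t)\| + C\,\|X_t\|\,\|\Pi_t(X_t)\|$ therefore provides a uniform dominator; its $L^1$-integrability follows by a further Cauchy--Schwarz, namely $\mathbb{E}\|\Pi_t(X_t)\| \leq (\mathbb{E}\|\Pi_t(X_t)\|^2)^{1/2} < \infty$ and $\mathbb{E}[\|X_t\|\,\|\Pi_t(X_t)\|] \leq (\mathbb{E}\|X_t\|^2)^{1/2}(\mathbb{E}\|\Pi_t(X_t)\|^2)^{1/2} < \infty$, which together dispatch both summands.

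I do not anticipate any genuine obstacle; the only point requiring care is ensuring that nothing in the bound secretly depends on $R$, and this is guaranteed by the fixed-profile construction of $\eta_R$, which propagates $L^\infty$ bounds on $\eta$ and $\nabla \eta$ to $\eta_R$ and $\nabla \eta_R$ uniformly in $R$. A slightly sharper alternative would exploit the finer bound $|\nabla \eta_R| \leq C/R$ together with the support constraint $\|x\| \leq 2R$ on $\textup{supp}\,\nabla \eta_R$ to absorb the $\|x\|$ factor, yielding a dominator that only requires the Reynolds-tensor integrability hypothesis; the version above instead uses both assumptions of the lemma verbatim.
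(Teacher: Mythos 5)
Your proof is correct and follows essentially the same route as the paper: bound $\nabla\Phi_R$ uniformly in $R$ via a fixed-profile cutoff, dominate the pairing by terms built from $\|x\|$ and $\|\Pi_t(x)\|$, and conclude integrability from the two $L^2$ hypotheses. The only cosmetic differences are that you compute $\nabla\Phi_R = \eta_R I_d + x\otimes\nabla\eta_R$ explicitly and close with Cauchy--Schwarz in expectation, whereas the paper absorbs the product via $ab \lesssim a^2+b^2$ into the dominator $\|X_t\|^2 + \sum_{ij}|\Pi_t^{ij}(X_t)|^2$ (and your explicit computation in fact quietly corrects the paper's slightly too strong claim that $|\nabla\Phi_R(x)| \lesssim \|x\|$ near the origin).
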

\begin{proof}
  Start by noticing that the cut-off function $\eta_R$ can be chosen such that
  \begin{equation*}
    \| \, \eta_R \, \|_{W^{1,\infty}} \leq C
  \end{equation*}
  for some constant $C > 0$ independent of $R$ and so for all $x\in \Rd$
  \begin{equation*}
    \left| \nabla \Phi_R(x) \right| \lesssim \| x \| \, .
  \end{equation*}
  Thus, for any $x \in \Rd$ we can write
\begin{align*}
  \Big| \nabla \Phi_R(x) : \Pi_t(x) \Big| &\leq \sum_{ij} \Big| \nabla \Phi_R(x) \Big| \, \left| \, \Pi_t^{ij}(x) \, \right| \\
  & \lesssim \| x \| \, \sum_{ij} \left| \, \Pi_t^{ij}(x) \, \right| \\
  & \lesssim \| x \|^2 + \Big( \sum_{ij} \left| \, \Pi_t^{ij}(x) \, \right| \Big)^2 \\
  & \lesssim \| x \|^2 + \sum_{ij} \left| \, \Pi_t^{ij}(x) \, \right|^2
\end{align*}
where in the third line we used the Cauchy-Schwartz inequality and in the third line we used Jensen's inequality, suppressing constants depending on the dimension $d$.
Now we can set
\begin{equation*}
  Y_t = \| X_t \|^2 + \sum_{ij} \left| \, \Pi_t^{ij}(X_t) \, \right|^2
\end{equation*}
which is clearly in $L^1$ by the assumptions in the statement of the lemma.
\end{proof}
\begin{theorem}
    Under the assumptions
    \begin{align*}
        &\mathbb{E} \, \| X_t \|^2  < \infty \, \textup{and} \, , \\
        &\mathbb{E} \, \| \Pi_t(X_t) \|^2 < \infty \, , \\
        &\mathbb{E} \, \| a_t(X_t) \|^2 < \infty \, .
    \end{align*}
    any process $X$ satisfying~\eqref{eq:reformulation-2} satisfies
    \begin{equation}
        \label{eq:geometric-constraints}
        -\mathbb{E}\Big[ \, \textup{Tr} \, \Pi_t(X_t) \, \Big] = \mathbb{E} \left[X_t \cdot \ddot X_t \right]
    \end{equation}
\end{theorem}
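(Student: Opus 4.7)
The plan is to run the same cutoff/integration-by-parts argument used for Theorem~\ref{thm:affine_processes}, but applied to the \emph{full} PDE $\nabla \cdot (\rho_t \Pi_t) = \rho_t a_t$ rather than its $a_t \equiv 0$ specialization. The key observation is that pairing both sides against the vector test function $\Phi_R(x) = \eta_R(x)\, x$ isolates exactly the quantities appearing in the claimed identity: the divergence term on the left produces $\textup{Tr}\,\Pi_t$, while the body-force term on the right produces the radial pairing $x \cdot a_t(x)$.

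Concretely, I would first test the PDE against $\Phi_R$ and integrate by parts on the left-hand side, as in the proof of Theorem~\ref{thm:affine_processes}. Inside $B_R(0)$ the gradient $\nabla \Phi_R$ equals the identity $I_d$, so the LHS yields $-\int_{B_R(0)} \textup{Tr}\,\Pi_t(x)\, d\rho_t(x)$ plus a remainder supported on the annulus $\{R \le \|x\| \le 2R\}$. The RHS is $\int \eta_R(x)\, x \cdot a_t(x)\, d\rho_t(x)$. Sending $R \to \infty$ and invoking the tower property
\begin{equation*}
    \mathbb{E}[X_t \cdot a_t(X_t)] \;=\; \mathbb{E}\bigl[X_t \cdot \mathbb{E}[\ddot X_t \mid X_t]\bigr] \;=\; \mathbb{E}[X_t \cdot \ddot X_t]
\end{equation*}
delivers the desired balance $-\mathbb{E}[\textup{Tr}\,\Pi_t(X_t)] = \mathbb{E}[X_t \cdot \ddot X_t]$.

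The main technical obstacle is justifying the $R \to \infty$ limit by dominated convergence. On the LHS the annular remainder is already controlled by Lemma~\ref{lemma:integrability}, which supplies a uniform-in-$R$ envelope for $|\nabla \Phi_R(x) : \Pi_t(x)|$ under the assumptions $\mathbb{E}\|X_t\|^2 < \infty$ and $\mathbb{E}\|\Pi_t(X_t)\|^2 < \infty$. On the RHS I would add a one-line analogue: since $\eta_R$ is bounded by a constant independent of $R$, one has $|\eta_R(x)\, x \cdot a_t(x)| \lesssim \|x\|^2 + \|a_t(x)\|^2$, which is $\rho_t$-integrable precisely because of the newly imposed hypothesis $\mathbb{E}\|a_t(X_t)\|^2 < \infty$. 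With both dominations in hand, passing to the limit and combining yields the theorem. No new ideas beyond Theorem~\ref{thm:affine_processes} and its accompanying integrability lemma are required; the added acceleration integrability assumption exists exactly to support this additional dominated-convergence step.
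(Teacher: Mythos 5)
Your proposal is correct and follows essentially the same route as the paper: the authors likewise rerun the Theorem~\ref{thm:affine_processes} cutoff/integration-by-parts argument on the full balance law and handle the new acceleration term by dominated convergence under the added hypothesis $\mathbb{E}\,\|a_t(X_t)\|^2 < \infty$, finishing with the tower property $\mathbb{E}[X_t \cdot a_t(X_t)] = \mathbb{E}[X_t \cdot \ddot X_t]$. If anything, your pointwise envelope $|\eta_R(x)\, x \cdot a_t(x)| \lesssim \|x\|^2 + \|a_t(x)\|^2$ is a cleaner justification of the dominated-convergence step than the paper's Cauchy--Schwarz bound on the expectation, which only establishes finiteness rather than a uniform-in-$R$ dominating function.
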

\begin{proof}
  The proof follows exactly the same steps as the proof of Theorem~\ref{thm:affine_processes} with the additional application of the Lebesgue dominated convergence theorem in the quantity
  \begin{equation*}
    \mathbb{E} \Big[ \eta_R(X_t) \cdot a(X_t) \Big] = \mathbb{E} \Big[ \eta_R(X_t) \cdot \ddot X_t \Big] \, ,
  \end{equation*}
  which is justified since by the Cauchy-Schwartz inequality we have
  \begin{equation*}
    \left| \mathbb{E} \Big[ \eta_R(X_t) \cdot a(X_t) \Big] \right|^2 \leq \mathbb{E} \Big[ \| a(X_t) \|^2 \Big] \, \mathbb{E} \Big[ \| X_t \|^2 \Big] < \infty \, ,
  \end{equation*}
  using a Cauchy-Schwartz inequality and $\left| \eta_R(X_t) \right| \leq |X_t|$ by the definition of $\eta_R$.
\end{proof}
\begin{lemma}[Continuity Equation]
  \label{lemma:continuity_equation}
  We have the identity
  \begin{equation*}\label{eq:continuity_equation}
    \partial_t \,  \rho_t + \nabla \cdot \left( \rho_t \, v_t \right) = 0
  \end{equation*}
  for all $t \in [0,1]$.
\end{lemma}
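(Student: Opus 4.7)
The plan is to derive the continuity equation by testing both sides against an arbitrary smooth compactly supported scalar function $\phi \in C^\infty_c(D)$, converting spatial statements into expectations over the process $(X_t)_t$, and exploiting the fact that $v_t$ is \emph{defined} as a conditional expectation of $\dot X_t$. The overall structure parallels the proof of the momentum balance Lemma above; indeed, the continuity equation is its ``zeroth moment'' analogue, obtained by tracking the evolution of scalar observables rather than vector-valued ones.

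Concretely, I would start from
\begin{equation*}
    \int \phi(x)\,\partial_t \rho_t(x)\,dx \;=\; \partial_t \int \phi(x)\,\rho_t(x)\,dx \;=\; \partial_t\,\mathbb{E}\!\left[\phi(X_t)\right],
\end{equation*}
where the first equality swaps $\partial_t$ with the spatial integral (justified by the regularity of $\rho_t$ and the compact support of $\phi$) and the second rewrites the integral as an expectation under $\mu_t = \mathrm{Law}(X_t)$. The chain rule for the $W^{2,2}$ sample paths of $X$ then gives
\begin{equation*}
    \partial_t\,\mathbb{E}\!\left[\phi(X_t)\right] \;=\; \mathbb{E}\!\left[\nabla \phi(X_t)\cdot \dot X_t\right] \;=\; \mathbb{E}\!\left[\nabla \phi(X_t)\cdot \mathbb{E}[\dot X_t \mid X_t]\right] \;=\; \mathbb{E}\!\left[\nabla \phi(X_t)\cdot v_t(X_t)\right],
\end{equation*}
where the middle step uses the tower property together with the $X_t$-measurability of $\nabla \phi(X_t)$. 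Re-expressing the final expectation as a Lebesgue integral against $\rho_t$ and integrating by parts (boundary terms vanish by compact support of $\phi$ in $D$) yields
\begin{equation*}
    \int \phi(x)\,\partial_t \rho_t(x)\,dx \;=\; -\int \phi(x)\,\nabla\!\cdot\!(\rho_t\,v_t)(x)\,dx.
\end{equation*}
Since this identity holds for every $\phi \in C^\infty_c(D)$, the fundamental lemma of the calculus of variations forces the pointwise identity $\partial_t \rho_t + \nabla\!\cdot\!(\rho_t v_t) = 0$ on $D$.

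The only delicate point is justifying the differentiation-under-the-integral step $\partial_t\,\mathbb{E}[\phi(X_t)] = \mathbb{E}[\nabla \phi(X_t)\cdot \dot X_t]$, which requires a uniform integrable dominator for the difference quotient of $\phi(X_t)$. This is routine under Assumption~\ref{assumptoion:regularity}: the sample paths are in $W^{2,2}([0,1];\R^d)$, so $\dot X_t$ is locally integrable in $t$; combined with $\|\nabla\phi\|_\infty < \infty$ and the compact support of $\phi$, dominated convergence applies. No other step poses a real obstacle, and I would expect this to be the main item requiring care in a fully rigorous write-up.
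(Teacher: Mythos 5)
Your proposal is correct and follows essentially the same route as the paper's proof: test against $\varphi \in C^\infty_c(D)$, rewrite the spatial integral as $\partial_t\,\mathbb{E}[\varphi(X_t)]$, apply the chain rule and the tower property to replace $\dot X_t$ by $v_t(X_t)$, integrate by parts, and conclude by arbitrariness of the test function. Your additional remark on dominating the difference quotient is a sensible refinement of a step the paper leaves implicit, but it does not change the argument.
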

\begin{proof}
  For $D^\circ \subset \Rd$, fix a test function $\varphi \in C^\infty_c(D)$ and compute
  \begin{align*}
    \int_{\Rd} \varphi(x) \, \partial_t \, \rho_t(x) \, dx 
    &= \partial_t \, \int_{\Rd} \varphi(x) \, d \rho_t(x) \\
    &= \partial_t \, \mathbb{E} \left[ \varphi(X_t) \right] \\
    &= \mathbb{E} \left[ \nabla \varphi(X_t) \cdot \dot X_t \right] \\
    &= \mathbb{E} \Big[ \nabla \varphi(X_t) \cdot v_t(X_t) \Big] \\
    &= \int_{\Rd} \nabla \varphi(x) \cdot v_t(x) \, d\rho_t(x) \\
    &= -\int_{\Rd} \varphi(x) \, \nabla \cdot \left( \rho_t(x) \, v_t(x) \right) \, dx
  \end{align*}
  where we used integration by parts, the fact that $\varphi$ has compact support as well as properties of the conditional expectation.
  Since the above holds for all $\varphi \in C^\infty_c(\Rd)$ we get the desired result.
\end{proof}

\end{document}